\documentclass{article}
%%%%% NEW MATH DEFINITIONS %%%%%

\usepackage{amsmath,amsfonts,bm}

% Mark sections of captions for referring to divisions of figures

% Highlight a newly defined term

% Figure reference, lower-case.

% Figure reference, capital. For start of sentence

% Section reference, lower-case.

% Section reference, capital.

% Reference to two sections.

% Reference to three sections.

% Reference to an equation, lower-case.
\def\eqref#1{equation~\ref{#1}}
% Reference to an equation, upper case

% A raw reference to an equation---avoid using if possible

% Reference to a chapter, lower-case.

% Reference to an equation, upper case.

% Reference to a range of chapters

% Reference to an algorithm, lower-case.

% Reference to an algorithm, upper case.

% Reference to a part, lower case

% Reference to a part, upper case

\def\1{\bm{1}}

% Random variables

% rm is already a command, just don't name any random variables m

% Random vectors

% Elements of random vectors

% Random matrices

% Elements of random matrices

% Vectors

% Elements of vectors

% Matrix

% Tensor
\DeclareMathAlphabet{\mathsfit}{\encodingdefault}{\sfdefault}{m}{sl}
\SetMathAlphabet{\mathsfit}{bold}{\encodingdefault}{\sfdefault}{bx}{n}

% Graph

% Sets

% Don't use a set called E, because this would be the same as our symbol
% for expectation.

% Entries of a matrix

% entries of a tensor
% Same font as tensor, without \bm wrapper

% The true underlying data generating distribution

% The empirical distribution defined by the training set

% The model distribution

% Stochastic autoencoder distributions

 % Laplace distribution

% Wolfram Mathworld says $L^2$ is for function spaces and $\ell^2$ is for vectors
% But then they seem to use $L^2$ for vectors throughout the site, and so does
% wikipedia.

 % See usage in notation.tex. Chosen to match Daphne's book.

% if you need to pass options to natbib, use, e.g.:
\PassOptionsToPackage{numbers, compress}{natbib}
% before loading neurips_2020

% ready for submission
% \usepackage{neurips_2020}

% to compile a preprint version, e.g., for submission to arXiv, add add the
% [preprint] option:
     \usepackage[preprint]{neurips_2020}

% to compile a camera-ready version, add the [final] option, e.g.:
%     \usepackage[final]{neurips_2020}

% to avoid loading the natbib package, add option nonatbib:
%     \usepackage[nonatbib]{neurips_2020}
%\usepackage[square,sort,comma,numbers]{natbib}

\usepackage[utf8]{inputenc} % allow utf-8 input
\usepackage[T1]{fontenc}    % use 8-bit T1 fonts
\usepackage{hyperref}       % hyperlinks
\usepackage{url}            % simple URL typesetting
\usepackage{booktabs}       % professional-quality tables
\usepackage{amsfonts}       % blackboard math symbols
\usepackage{nicefrac}       % compact symbols for 1/2, etc.
\usepackage{microtype}      % microtypography

\usepackage{hyperref}
\usepackage{url}
\usepackage{graphicx, wrapfig, lipsum}
\usepackage{subcaption}
\usepackage{booktabs}
\usepackage{amsthm}
\usepackage{tabularx}
\newcolumntype{C}{>{\Centering\arraybackslash}X}
\newtheorem{theorem}{Theorem}[section]

\newtheorem{lemma}[theorem]{Lemma}
\title{Formatting Instructions For NeurIPS 2020}

% The \author macro works with any number of authors. There are two commands
% used to separate the names and addresses of multiple authors: \And and \AND.
%
% Using \And between authors leaves it to LaTeX to determine where to break the
% lines. Using \AND forces a line break at that point. So, if LaTeX puts 3 of 4
% authors names on the first line, and the last on the second line, try using
% \AND instead of \And before the third author name.

\title{SEMULATOR: Emulating the Dynamics of Crossbar Array-based Analog Neural System with Regression Neural Networks}

% Authors must not appear in the submitted version. They should be hidden
% as long as the \iclrfinalcopy macro remains commented out below.
% Non-anonymous submissions will be rejected without review.

\author{Chaeun Lees and Seyoung Kim \\
Department of Material Science and Engineering\\
Pohang University of Science and Technology\\
Pohang-si, Republic of Korea \\
\texttt{\{chaeunl, kimseyoung\}@postech.ac.kr} \\
}

% The \author macro works with any number of authors. There are two commands
% used to separate the names and addresses of multiple authors: \And and \AND.
%
% Using \And between authors leaves it to \LaTeX{} to determine where to break
% the lines. Using \AND forces a linebreak at that point. So, if \LaTeX{}
% puts 3 of 4 authors names on the first line, and the last on the second
% line, try using \AND instead of \And before the third author name.

%\iclrfinalcopy % Uncomment for camera-ready version, but NOT for submission.
\begin{document}

\maketitle

\begin{abstract}
As deep neural networks require tremendous amount of computation and memory, analog computing with emerging memory devices is a promising alternative to digital computing for edge devices. However, because of the increasing simulation time for analog computing system, it has not been explored. To overcome this issue, analytically approximated simulators are developed, but these models are inaccurate and narrow down the options for peripheral circuits for multiply-accumulate operation (MAC). In this sense, we propose a methodology, SEMULATOR (SiMULATOR by Emulating the analog computing block) which uses a deep neural network to emulate the behavior of crossbar-based analog computing system. With the proposed neural architecture, we experimentally and theoretically shows that it emulates a MAC unit for neural computation. In addition, the simulation time is incomparably reduced when it compared to the circuit simulators such as SPICE.
\end{abstract}

\section{Introduction}
\label{sec_introduction}

For the application of neural networks on edge devices, analog computing using emerging memory devices with crossbar array architecture is promising in comparison with digital computing in the aspect of its plentiful representation capability with efficiency. This is because analog computing is based on the motivation that analog computation (e.g., peripheral circuit for accumulation) and memory unit (e.g., crossbar array, memory devices) have infinitely continuous states whereas digital computing has only a few states. Along with it, the time complexity of multiplication and accumulation in the crossbar array is $O(1)$. The nature of analog computing requires accurate circuit simulators such as SPICE, but it takes much time to simulate a cumbersome system like neural networks. Considering that research on deep learning enters golden age thanks to the development of GPU, the low-speed simulators for analog computing system is one of main factors that hinders research on this field. 

To enhance simulation speed, approximated simulators, or analytical models, replace accurate circuit simulators. They simplify the computation and memory unit into an analytic function rather than dozens of non-analytic functions, enabling the analytic functions to be merged into high-level programming language and make use of machine learning frameworks which are run on GPU environment. However, they have three problems: Analytically modeled units result in overestimating the issues in analog computing~\cite{chakraborty2020resistive, chakraborty2020geniex} due to inaccurate modeling. On top of that, they require human experts to approximately model each unit into simple one, so that this methodology necessitates tremendous amount of human resources. If it is hard to approximate with an analytic function, then the units are modeled with multiple analytic functions with conditional statement such as a spline~\cite{gupta2017steam}, which is reluctant in the GPU environment. 

In another way, statistical model~\cite{mcconaghy2011high} and neural networks~\cite{nguyen2019deep} are used to emulate analog computation units or analog circuits. However, these works cannot deal with large input dimensions, while analog computing blocks which are consisted of hundreds of analog computation and memory units have thousands of input parameters. With digital computing system, a digital computing block such as logic gates is relatively smaller than that of analog, so that small fully connected neural network (FCNN) emulates their behavior with little error~\cite{abrishami2019csm, abrishami2020nn}. Instead of emulating the analog computing blocks, neural networks are used to find specific hyper-parameters of them or analog circuit system such as the arrangement of units or blocks~\cite{zhang2019circuit}, layout~\cite{hsieh2019learning}, and non-linear factor of crossbar arrays~\cite{chakraborty2020geniex}. On these specific tasks, neural networks and deep learning find  reasonable solutions, but there are little research on emulating an analog computing block to build a full analog neural system.

% neural network 로 단순히 모사하는 것의 한계 즉 regression 의 한계를 적절히 보여주자! fig 그려줘야될듯.

SEMULATOR proposes a methodology and neural network architecture for emulating the analog computing block. The neural network architecture for the purpose of emulating analog computing blocks includes feature extractors that fit with crossbar array architecture and circuit equation solver for extracted features of crossbar arrays (memory units) and circuit parameters of peripheral circuits (computing units). This architecture enables neural networks to emulate analog neural system, resolving the curse of dimension. In addition, while previous works only may use analog-digital-converters (ADCs) for analog computing, this architecture does not restrict the types of analog computing units. To verify the feasibility of our proposed neural architecture, we define the upper bound of error between SPICE simulation results and predicted values of neural networks, which gives conditions for training the neural networks such as required number of training data and epochs. Along with machine learning frameworks, as the architecture is trained on the frameworks, it can reduce simulation time astoundingly with little error compared to the result of SPICE. 

\section{Related Works}
\label{sec_related_works}

\subsection{Simulator for Analog Neural Computing}
\label{sec_simulator}
To systematically design analog computing neural networks, research groups have suggested simulators by approximating memory device and peripheral circuits. According to the type of approaches to model crossbar array~\cite{chakraborty2020resistive}, analytical inference model~\cite{jain2018rxnn, roy2020txsim} depicts the model as matrix and its linear operation. Analytical training models support differential operation for backward path using deep learning frameworks~\cite{he2019noise}. Some of them includes non-idealities in memory devices, so that assumes more complicated models~\cite{peng2020dnn+, maltae2020aihwkit}. However, because of the limitation of linearly-approximated analytical model, statistical model~\cite{chakraborty2020geniex} use neural networks to find the ratio of non-linear to linear model.

\subsection{Emulation by neural networks}
\label{sec_nn_for_DES}

In the aspect of circuit and device, statistical models~\cite{mcconaghy2011high,gupta2017steam} are widely used to emulate them by regression. As deep neural networks (DNNs) have the ability to solve ordinary or partial differential equations~\cite{dockhorn2019discussion, hsieh2019learning}, DNNs are used as a model to emulate the circuit behavior which can be represented by differential equations~\cite{nguyen2018transient, nguyen2019deep}. For the purpose of enhancing simulation speed, \cite{abrishami2019csm, abrishami2020nn} suggests to use simple neural networks for emulating logic gates, and then simulate a digital system with the neural networks. In another application of emulation, neural networks emulate the biological models~\cite{wang2019massive}.

\begin{figure}[!t]
\begin{center}
    \includegraphics[width=0.95\linewidth]{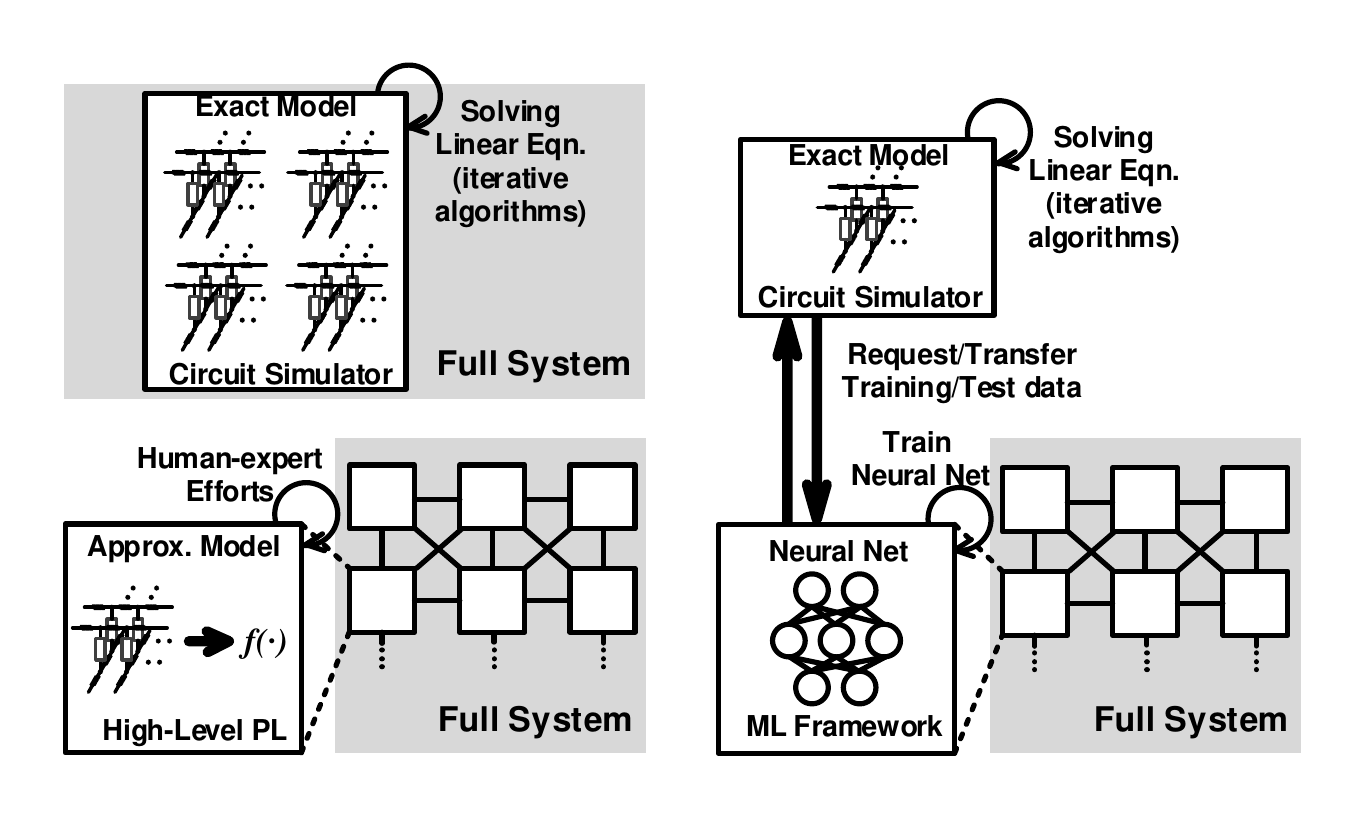}
\end{center}
\caption{Comparison of three methodologies for simulating analog computing system. (upper left) General circuit simulators such as SPICE. (lower left) Analytically approximated models by human experts. (right) Proposed methodologies.}
\label{fig_methodologies}
\end{figure}

\section{SEMULATOR}
\label{sec_semulator}
\subsection{Methodologies}
\label{subsec_methodologies}

SEMULATOR aims at emulating the response of circuits using crossbar array by neural networks. As shown in Figure~\ref{fig_methodologies}, general circuit simulators builds circuit equations and find solutions by iterative algorithms such as Newton-Raphson method. However, due to exploding circuit parameters and computation, the simulators require much time and resources. To relieve this issue, analog computation and memory units are approximated by analytical models which are developed by human experts~\cite{jain2018rxnn, roy2020txsim, he2019noise, peng2020dnn+, maltae2020aihwkit}: Crossbar arrays of analog memory units are linearly approximated, and other memory units such as memory devices and computation units are non-linearly approximated. This methodology incorporates the approximated models into deep learning frameworks to pursue efficient simulations. However, this approach 
narrows down the choice for analog computation and memory units. Because of underlying complicated physical model, the units cannot be represented by an analytic function, $f(\mathbf{x}),~\text{where}~\mathbf{x} \in \mathbb{R}^n$. Instead, it is natural to represent them as a set of non-analytic functions, $f(\mathbf{x})=f_{1}(\mathbf{x_1}),...,f(\mathbf{x})=f_{m}(\mathbf{x}_{m}),~\text{where}~\mathbf{x_1} \cup ... \cup \mathbf{x_m}=\mathbf{x} \in \mathbb{R}^n \text{~and~} \mathbf{x_i} \cap \mathbf{x_j} = \emptyset~(i \neq j)$~\cite{gupta2017steam}. 

On the other hand, emulation-based methodologies~\cite{abrishami2019csm, abrishami2020nn, chakraborty2020geniex} emulate computation or memory units such as logic gates and the non-linear factor in crossbar arrays. SEMULATOR also is involved in this methodology, but it emulates a complete computational analog computing block such as MAC unit or analog neuron. When an analog computing block is designed in circuit level, it requests the data for training and test. The neural network that we propose in Section~\ref{subsec_nnarch} is trained by regression to emulate the non-analytic function.
 
\subsection{Neural Network Architecture for SEMULATOR}
\label{subsec_nnarch}
\begin{wrapfigure}{r}{0.3\textwidth}
\vspace{-10pt}
\begin{center}
    \hspace{-0pt}
    \includegraphics[width=0.25\textwidth]{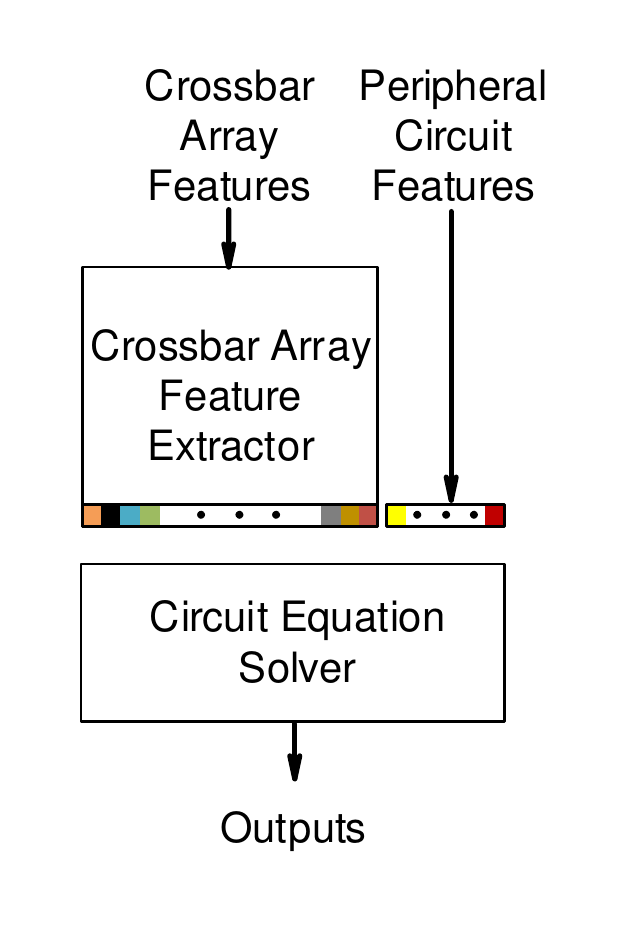}
    \vspace{-20pt}
\end{center}
\caption{The overall neural network architecture for SEMULATOR.}
\vspace{-0pt}
\label{fig_overall_arch}
\end{wrapfigure}
The neural network to emulate the analog computation block is mainly composed of two parts: Crossbar array feature extractor and circuit equation solver. Figure~\ref{fig_overall_arch} shows the schematic of our proposed architecture for SEMULATOR. The extracted hidden features and features of analog computing units which in general accumulates the current from analog memory units are concatenated. The concatenated features are interpreted as boundary conditions for circuit differential equations with respect to time domain, and so the circuit equation solver is expected to solve differential equations to find solutions of specific time slot. As neural networks have the ability to find solutions of specific partial or ordinary differential equations~\cite{dockhorn2019discussion, hsieh2019learning}, we use FCNN or Neural ODE~\cite{chen2018neural} as circuit equation solvers.

In the crossbar array, the behavior of each cell is mainly determined by the features of the cell such as applied voltage to the cell and the conductance of memory device in the cell. Therefore, the behavior of the crossbar array, $C(\mathbf{X})$, is formulated as follows:
\begin{equation}
C(\mathbf{X}) = C(d_{1,1,1}(\mathbf{x_{1,1,1}}),..,d_{r,c,t}(\mathbf{x_{r,c,t}}))
\end{equation}
,where $\mathbf{X}=(\mathbf{x}_{ijk}) \in \mathbb{R}^{f \times r \times c \times t}$ is a tensor and $\mathbf{x}_{i,j,k} \in \mathbb{R}^f.$ $f,r,c, \text{and}~t$ indicate the number of features in a cell, row, column, and tiles. According to the choice of a cell, the features of a cell are differed. For instance, 1R cell~\cite{meena2014overview} has two features, applied voltage to the cell and the conductance of the cell, and 1T1R cell~\cite{meena2014overview} has additional 1 transistor whose  features such as threshold voltage and W/L ratio.

In most cases of the crossbar array, $d_{i,j,k}(\cdot)$ has nearly same form, $d(\cdot)$, because the structure of each cell is similar, which means that the function, $d(\cdot)$, is shared along all cells along the row, column, and tile. As shown in Figure~\ref{fig_conv4xbar}, the architecture of crossbar array has structural and physical similarity with the convolutional neural network (CNN). At the first layer, the unit size filters which has unit width and length isolate the features of the cell to emulate its behavior, $d(\cdot)$. In the deeper layers, for instance, if the crossbar array is designed to accumulate currents along column, then the filters learn the column-wise locality. In addition, deeper layers allow networks to learn other non-linear behaviors in $C(\cdot)$. 

\begin{figure}[t]
\begin{center}
\includegraphics[width=0.98\linewidth]{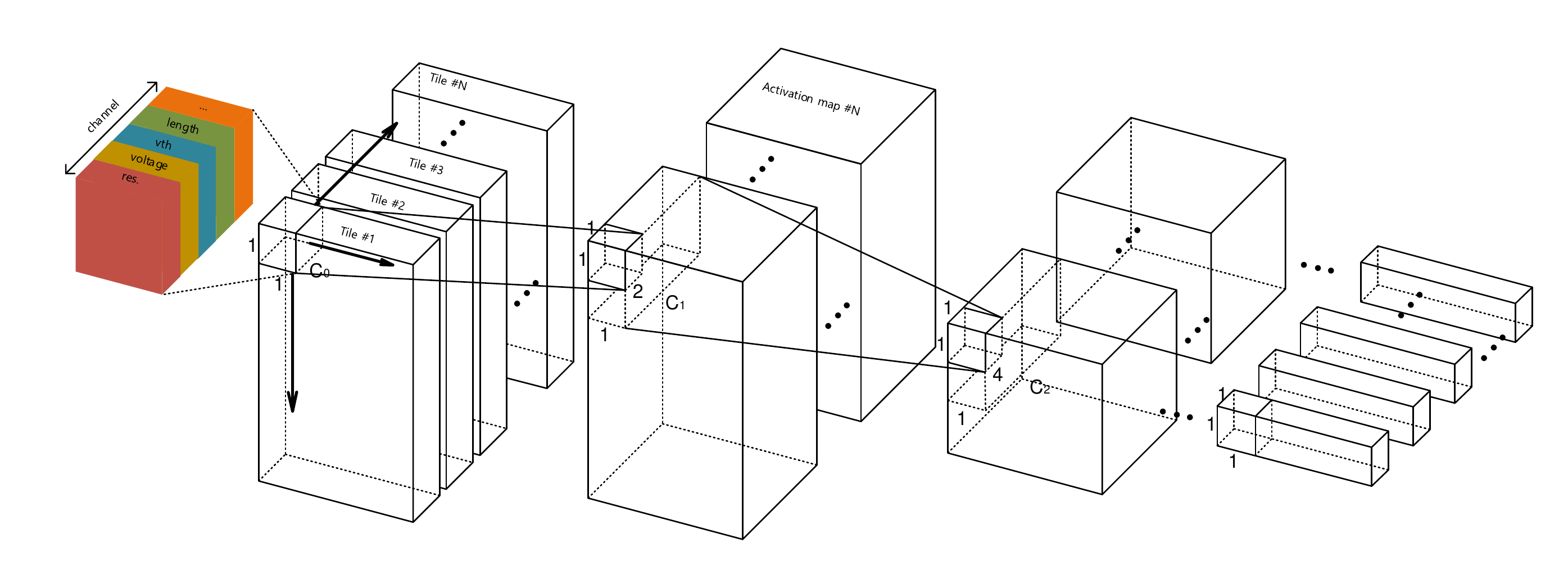}
\end{center}
\caption{The architecture of Conv4Xbar. In a feature map, a box with solid line indicates a output feature corresponding to a filter and a box with dotted line indicates a filter for the stage. Input feature maps has $N$ depths which corresponds to the number of total tiles and $C_{0}$ channels whose values are features of a cell in the crossbar array such as resistance, applied voltage, and so on.}
\label{fig_conv4xbar}
\end{figure}

To guarantee these assumptions, we adopt 3D convolutional neural networks (3D-CNN) whose depth is 1 and the size of length or width increase through the deeper layer. We denote this type of neural networks as Conv4Xbar as shown in Figure~\ref{fig_conv4xbar}. In the aspect of optimizing neural networks, it has advantages along with the structural similarity. The filters are trained scanning through all cells in the tiles, which leads to make the filters more generalized. The generalization power of this architecture stems from the belief that incorporating the prior knowledge improves generalization~\cite{dugas2009incorporating}. In addition, the filters help neural networks not to suffer from the curse of dimension because the filters are shared for all tiles of input feature maps or activation maps.

\section{Emulating analog computing block}
\label{sec_emulating_acb}

\subsection{statistical verification}
\label{sec_statistical_verification}
As analog computing has continuous states, the emulated neural networks are not free of error though it is believed small enough. This is because the error can be amplified in deep neural networks as it propagates through deeper layer. \cite{liao2018defense} addresses error amplification effect of adversarial attacks. To mitigate the effect, the neural networks should be trained under the upper bound of error. In this sense, it is necessary to evaluate training neural networks by observing training or validation. Theorem~\ref{thm_err_and_loss} gives the upper bound of the error by directly observing the mean-squared-error (MSE) loss, where the upper bound of loss is represented by the significant bit of error, $s$, and the probability, $p$, of the condition that the error is less than half of significant bit. It is also possible to directly evaluate the condition, $P_{(X,Y) \sim D}(|Y-f(X)| < 0.5 \cdot 10^{-s}) > p$, but if target neural networks has multiple outputs, it is controversial to evaluate them respectively.

\begin{theorem}
Let $f$ be a regression neural network and $D$ is the distribution of training data. To satisfy the condition, $P_{(X,Y) \sim D}(|Y-f(X)| < 0.5 \cdot 10^{-s}) > p$, the upper bound of mean-squared error $\mathbb{E}_{(X,Y) \sim D}(|Y-f(X)|^2)$ is $\Big(\cfrac{1}{2} \cdot \cfrac{10^{-s}}{erf^{-1}(p)}\Big)$ for the given $s$ and $p$.
\label{thm_err_and_loss}
\end{theorem}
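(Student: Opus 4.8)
The plan is to reduce the statement to a one-line inversion of the Gaussian error function, after first fixing a distributional model for the pointwise residual $E := Y - f(X)$. Concretely, I would assume — as is standard for a well-fitted least-squares regressor, and as the appearance of $\operatorname{erf}^{-1}$ in the claimed bound tacitly requires — that the residual of the trained network is (approximately) zero-mean Gaussian, $E \sim \mathcal{N}(0,\sigma^2)$. Because $\E[E]=0$, the quantity we want to bound is exactly the variance: $\E_{(X,Y)\sim D}\bigl(|Y-f(X)|^2\bigr) = \Var(E) = \sigma^2$. This collapses the whole problem to controlling the single scale parameter $\sigma$.

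Second, I would write the event in the hypothesis in closed form. Setting $t := \tfrac12\cdot 10^{-s}$, a zero-mean Gaussian satisfies $P\bigl(|E| < t\bigr) = \operatorname{erf}\!\bigl(t/\sigma\bigr)$ under the convention in which $\operatorname{erf}(z)$ denotes the probability that a standard normal deviate falls within $z$ standard deviations (i.e. $\operatorname{erf}(z) = 2\Phi(z)-1$ with $\Phi$ the standard normal CDF). The design requirement $P_{(X,Y)\sim D}\bigl(|Y-f(X)| < 0.5\cdot 10^{-s}\bigr) > p$ then reads simply $\operatorname{erf}(t/\sigma) > p$.

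Third, since $\operatorname{erf}$ is continuous and strictly increasing on $[0,\infty)$ and $p\in(0,1)$, it admits an inverse there, so the inequality is equivalent to $t/\sigma > \operatorname{erf}^{-1}(p)$, i.e. $\sigma < t/\operatorname{erf}^{-1}(p) = \tfrac12\cdot 10^{-s}/\operatorname{erf}^{-1}(p)$; substituting $\sigma^2 = \E_{(X,Y)\sim D}\bigl(|Y-f(X)|^2\bigr)$ back in yields the stated bound (read as a constraint on the root-mean-squared error $\sigma=\sqrt{\mathrm{MSE}}$, which is the natural reading of "upper bound" here). I would close with a brief monotonicity remark: decreasing the MSE only shrinks $\sigma$, hence only enlarges $P(|E|<t)$, so the displayed value is precisely the threshold at which the probability constraint becomes satisfiable — the bound is tight, not merely sufficient.

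I expect the one genuinely non-routine point to be the Gaussian model for the residual; everything after that is the algebraic inversion above. I would justify it by the usual asymptotic-normality heuristic for consistent least-squares / maximum-likelihood estimators — the analog block aggregates many small, roughly independent error sources, so a central-limit-type argument makes the zero-mean Gaussian residual the natural working hypothesis — while being explicit that the theorem is a practical training criterion under this assumption rather than a distribution-free inequality. A secondary subtlety is the multi-output case noted just before the statement: I would keep the proof at the scalar-output level and remark that for vector outputs one applies the bound coordinatewise (or to the worst-behaved coordinate), which is exactly why monitoring a single aggregate MSE is preferable to checking the probabilistic condition $P(|Y-f(X)|<0.5\cdot 10^{-s})>p$ directly.
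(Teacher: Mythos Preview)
Your approach is essentially identical to the paper's: assume the residual $Z=Y-f(X)$ is zero-mean Gaussian (the paper delegates this to a separate Lemma rather than arguing it heuristically as you do), identify the MSE with $\sigma^{2}$, express the probability constraint via the error function, and invert. The only discrepancies are bookkeeping on the constants: the paper uses the standard $\operatorname{erf}$ (so a $\sqrt{2}$ appears), silently works with the threshold $10^{-s}$ rather than $0.5\cdot 10^{-s}$ inside its proof, and states the final bound for the MSE itself as $\tfrac{1}{2}\bigl(10^{-s}/\operatorname{erf}^{-1}(p)\bigr)^{2}$ rather than for the RMSE --- your non-standard $\operatorname{erf}$ convention and your reading of the displayed quantity as an RMSE threshold are precisely the compensating choices that make the theorem \emph{statement} come out as written.
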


\begin{proof}
Let the error, $Z=Y-f(X)$. By Lemma~\ref{lemma_error}, $Z \sim \mathcal{N}(0, \sigma^2)$. Then, we have
\begin{align*}
P_{Z}(|Z| < 10^{-s}) & = P_{Z}(Z < 10^{-s}) - P_{Z}(Z < -10^{-s}) = \Phi\Big(\frac{10^{-s}}{\sigma}\Big) - \Phi\Big(-\frac{10^{-s}}{\sigma}\Big) \\
                     & = \frac{1}{2}\Big( 1 + \text{erf} \Big(\frac{10^{-s}}{\sqrt{2\sigma^{2}}}\Big)\Big) - \frac{1}{2}\Big( 1 + \text{erf} \Big(-\frac{10^{-s}}{\sqrt{2\sigma^{2}}}\Big)\Big) = \text{erf} \Big(\frac{10^{-s}}{\sqrt{2\sigma^{2}}}\Big) > p
\end{align*}
,wehre $\Phi(x)=\frac{1}{\sqrt{2\pi}}\int_{-\infty}^{x}e^{-\frac{t^{2}}{2}}\,dt = \frac{1}{2}\Big(1+\text{erf}\Big(\frac{x}{\sqrt{2}}\Big)\Big)$. As we know that $\mathbb{E}_{Z}({|Z|}^{2})=\sigma^{2}$, we can find the upper bound of $\mathbb{E}_{(X,Y) \sim D}(|Y-f(X)|^2)$ to be $\frac{1}{2} \Big(\frac{10^{-s}}{\text{erf}^{-1}(p)}\Big)^{2}$. 
\end{proof}

\begin{lemma}
If a neural network $f$ is trained for regression data $D$ with the mean squared error, then the error, $Y-f(X)=Z \sim  \mathcal{N}(0, \sigma^{2})$.
\label{lemma_error}

\end{lemma}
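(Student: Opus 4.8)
The plan is to prove the lemma by making explicit the well-known equivalence between minimizing the mean squared error and performing maximum-likelihood estimation under an additive Gaussian observation model, and then to read the law of the residual $Z = Y - f(X)$ directly off that model. So I would first set up the standard regression picture, $Y = g(X) + Z$ with $Z$ independent of $X$; the substance of the lemma is that the quadratic loss is exactly the surrogate that is consistent with $Z$ being a centered Gaussian, and the cleanest route is to exhibit this correspondence rather than to try to manufacture normality out of nothing.

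Concretely, I would posit the conditional model $p(Y \mid X) = \mathcal{N}(Y;\, f(X),\, \sigma^2)$ and expand its log-density as $\log p(y \mid x) = -\tfrac{1}{2\sigma^2}(y - f(x))^2 - \tfrac{1}{2}\log(2\pi\sigma^2)$. Taking the expectation under $(X,Y)\sim D$, the negative log-likelihood equals $\tfrac{1}{2\sigma^2}\,\E_{(X,Y)\sim D}\big(|Y - f(X)|^2\big)$ plus a constant; hence, for fixed $\sigma^2$, minimizing the MSE is identical to maximizing the likelihood of this Gaussian model over the hypothesis class, and the induced estimate of $\sigma^2$ is the minimized residual second moment. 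Next I would invoke the fact that the conditional mean is the unique minimizer of the expected squared error: assuming the network is expressive enough and training reaches the optimum on $D$, the fitted function satisfies $f(X) = \E[Y \mid X]$, so $\E[Z \mid X] = 0$ and in particular $\E[Z] = 0$. Combining this with the observation model gives $Z \mid X \sim \mathcal{N}(0, \sigma^2)$, a law that does not depend on $X$, and therefore $Z \sim \mathcal{N}(0, \sigma^2)$ with $\sigma^2 = \E[Z^2]$, which is the assertion. As an alternative phrasing of the same point, one can appeal to the maximum-entropy characterization: among residual distributions with mean zero and a prescribed second moment, the quadratic loss implicitly selects the Gaussian.

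I expect the main obstacle to be honesty about hypotheses rather than any hard computation. The conclusion genuinely requires that the noise be additive and homoscedastic (a single $\sigma^2$ across all $X$) and that optimization actually attain the population optimum; if the true conditional law of $Y$ given $X$ is non-Gaussian, MSE training only recovers the best Gaussian fit — equivalently, it minimizes $\KL$ to the Gaussian family — so the statement should be read as ``the residual is modeled as, and is well approximated by, $\mathcal{N}(0,\sigma^2)$'' rather than an exact identity, and I would flag these assumptions at the top of the proof so that Theorem~\ref{thm_err_and_loss}, which consumes this lemma, inherits them transparently. The gap between the empirical objective and its population counterpart is not really an issue here since $D$ already denotes the data distribution, but if one wants to be scrupulous it is closed by the usual law-of-large-numbers remark.
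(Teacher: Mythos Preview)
The paper does not actually supply a proof of Lemma~\ref{lemma_error}; it is stated bare and then invoked in the proof of Theorem~\ref{thm_err_and_loss}. So there is nothing to compare against line by line, and your proposal is effectively filling a gap the authors left open.

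Your argument is the standard and correct justification for this folklore claim: exhibit the identity between MSE minimization and maximum likelihood under the homoscedastic Gaussian observation model $Y\mid X \sim \mathcal{N}(f(X),\sigma^{2})$, then read off $Z\mid X \sim \mathcal{N}(0,\sigma^{2})$ and hence $Z\sim\mathcal{N}(0,\sigma^{2})$. The step where you use $f(X)=\E[Y\mid X]$ to get $\E[Z]=0$ is fine, and the identification $\sigma^{2}=\E[Z^{2}]$ is exactly what Theorem~\ref{thm_err_and_loss} needs downstream. You are also right to flag that the lemma, as written, is not a theorem but a modeling assumption: MSE training does not force the true residual law to be Gaussian, it only makes the Gaussian the implicit working model (equivalently, the maximum-entropy choice at fixed second moment). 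That caveat is more honest than the paper's own statement, and since Theorem~\ref{thm_err_and_loss} only uses $Z\sim\mathcal{N}(0,\sigma^{2})$ to convert an MSE bound into a tail probability via $\mathrm{erf}$, your framing makes transparent that the resulting bound is valid under the Gaussian residual approximation rather than unconditionally.
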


\subsection{generalization}
\label{sec_generalization}

\begin{table}[t]
\caption{Experimental results of analog computing blocks. Inputs have four axes, each of whom corresponds to (hardware parameters, tile, row address, column address). Outputs spread from voltage, current, and so on, which is the output of analog computing blocks. MAE indicates the mean-average-error between SPICE results and predicted values.}
\centering
\begin{tabular}{ l c c c c}
  \toprule
  \textbf{Computing Blocks} & \textbf{Inputs (C,D,H,W)}  & \textbf{Outputs (O)} & \textbf{Data (N)} & \textbf{MAE}\\ 
  \midrule
  RRAM+PS32~\cite{kim2019vcam} & (2,4,64,2) & 1 voltage & 50,000 & 0.981(mV) \\
  RRAM+PS32~\cite{kim2019vcam} & (2,2,64,8) & 4 voltage & 50,000 & 0.955(mV) \\
  \bottomrule
\end{tabular}
\label{table_generalization}
\end{table}

\begin{figure}[t]
    \includegraphics[width=\textwidth]{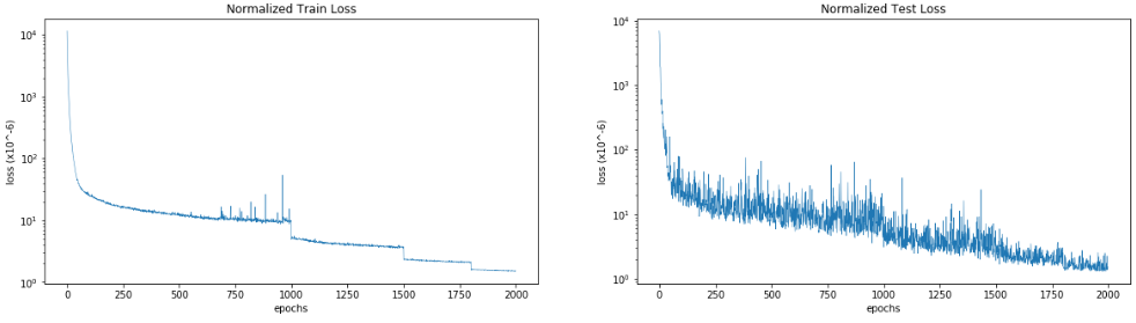}
\caption{Train and test loss of RRAM32+PS32 case. Learning rate is halved at 1000, 1500, and 1800 epochs.}
\label{fig_rram_ps32_loss}
\end{figure}

Table~\ref{table_generalization} shows the experimental results for various analog computing blocks with SEMULATOR. We use SPYCE~\cite{chaeun2020spyce} to generate data for SEMULATOR. RRAM+PS32 use 1T1R cell as analog memory units and customized analog circuit, PS32, for analog computing units. In this experiment, we assume 2 cases: 1 MAC unit for an analog computing block and 4 MAC units for an analog computing block. These cases differ from the hardware design choice. As shown in Figure~\ref{fig_rram_ps32_loss}, the train loss decreases with little gap between test loss, which means that the neural network is neither overfitted nor underfitted. Following Theorem~\ref{thm_err_and_loss}, we set the significant bit of error, $s$, to be 3 and the probability, $p$, to be 0.3, so that the upper bound of loss is about $6.7 \cdot 10^{-6}$. 

Along with loss, in a more generalized case, Figure~\ref{fig_rram_ps32_heatmap} shows the outputs of trained neural networks of the case, RRAM+32 and 1 voltage output, when the parameters of a cell are changed and other parameters are randomly chosen: weight axis corresponds to the normalized conductance (G) and activation to the normalized applied voltage (V). Considering that the output voltage response ($\Delta O$) of 1T1R cell is generally as followed,
\begin{align*}
\Delta O \approx G_{const},~&if~V < V_{const}\\
\Delta O \approx \frac{1}{2}k(V-V_{const})^{\alpha} ,~&otherwise
\label{equation_1t1r}
\end{align*}
the neural network properly generalizes the behavior of the non-linear cell. As we discussed before in Section~\ref{subsec_methodologies}, approximated models use the non-analytical functions, which leads to inaccurate model.

\begin{figure}[t]
    \begin{subfigure}{0.32\linewidth}
        \includegraphics[width=\linewidth]{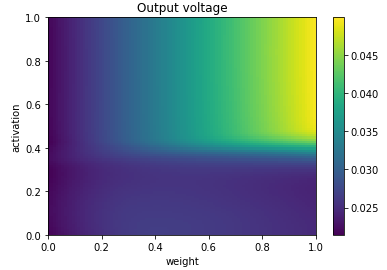}
    \end{subfigure}
    \begin{subfigure}{0.32\linewidth}
        \includegraphics[width=\linewidth]{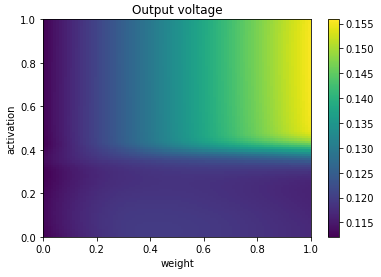}    
    \end{subfigure}
    \begin{subfigure}{0.32\linewidth}
        \includegraphics[width=\linewidth]{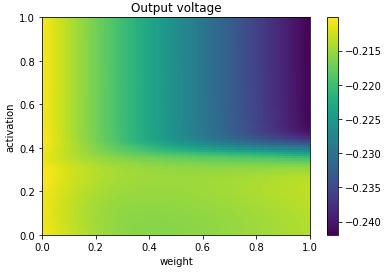}
    \end{subfigure}
\caption{For two normalized input parameters, voltage and conductance, the heatmap shows the output of RRAM+PS32. The two figures on the left are the results of the cell corresponding to a positive weight and right one to a negative weight.}
\label{fig_rram_ps32_heatmap}
\end{figure}

\section{Conclusion}
As neural networks have the ability to solve differential equations and emulate their behavior, we propose SEMULATOR which includes a methodology and neural architecture for crossbar array-based analog computing system to enhance simulation speed and accurate simulation. Therefore, it may adopt various types of peripheral circuits, and allow researcher not to simulate whole system on the classical circuit simulators where simulating cumbersome system is impossible. In addition, we suggest an upper bound of loss where the significant bit is specified.     

\textbf{Data Requirements and Loss}\\
Figure~\ref{fig_data_requirements} shows that as the number of data increases train loss decreases. It indicates that tens thousands of data are required to reduce the error and avoid underfitting. Considering that generating data requires additional resources such as licences for circuit simulators and CPU server, it is promising to suggest an algorithm to reduce the number of required data. 

\begin{figure}[h]
    \centering
    \includegraphics[width=0.7\linewidth]{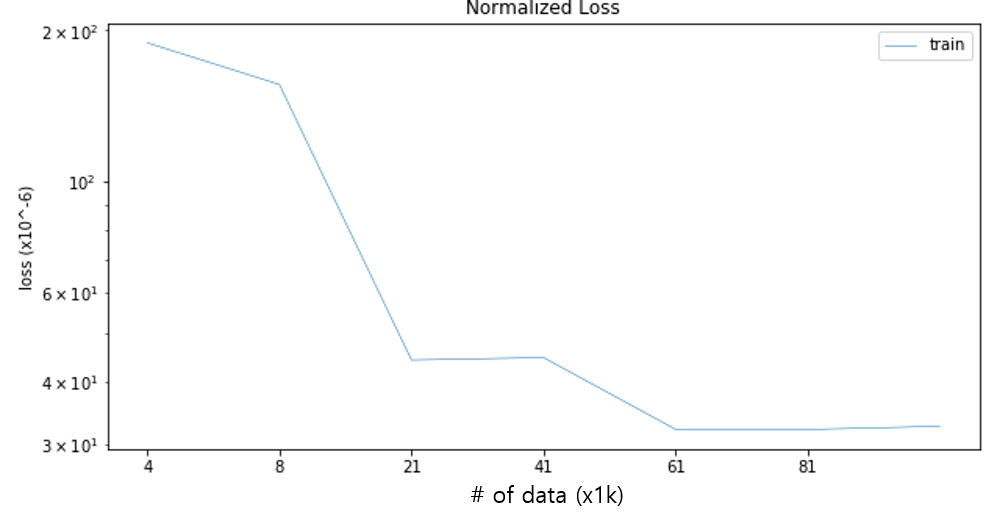}
\caption{The relationship between the number of data and train loss}
\label{fig_data_requirements}
\end{figure}

\bibliography{references}

\newpage

\appendix
\section{The neural network architectures for SEMULATOR.}

\begin{table}[h]
\caption{The neural network architectures for SEMULATOR. The format is as follows: Conv3d(in\_channels, out\_channels, kernel\_size=(D,H,W), stride\_size=(D,H,W), padding=(0,0,0)), Linear(in\_features, out\_features), where D,H, and W stands for depth, height, and width.}
\begin{tabularx}{\textwidth}{l X}
  \toprule
  Computing Blocks & Neural Network Architecture \\ 
  \midrule
  RRAM+PS32 &  Conv3d(2,16,(1,1,1),(1,1,1))-CELU-Conv3d(16,8,(1,2,1),(1,2,1))-CELU-Conv3d(8,4,(1,4,1),(1,4,1))-CELU-Conv3d(4,32,(1,8,1),(1,8,1))-CELU-Conv3d(32,32,(1,1,2),(1,1,1))-CELU-Linear(128,32)-CELU-Linear(32,16)-CELU-Linear(16,1) \\
  RRAM+PS32 & Conv3d(2,16,(1,1,1),(1,1,1))-CELU-Conv3d(16,8,(1,2,1),(1,2,1))-CELU-Conv3d(8,4,(1,4,1),(1,4,1))-CELU-Conv3d(4,32,(1,8,1),(1,8,1))-CELU-Conv3d(32,32,(1,1,2),(1,1,1))-CELU-Linear(256,32)-CELU-Linear(32,16)-CELU-Linear(16,1) \\
  \bottomrule 
\end{tabularx}
\label{table_neural_arch}
\end{table}

\newpage

\appendix
\section{The additional experimental results.}

\begin{figure}[h]
\centering
\begin{subfigure}{0.6\linewidth}
    \includegraphics[width=\linewidth]{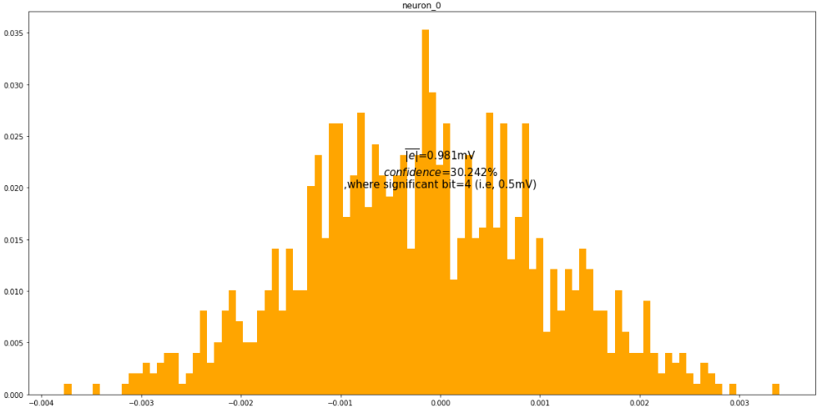} \\
\end{subfigure}
\begin{subfigure}{\linewidth}
    \includegraphics[width=\linewidth]{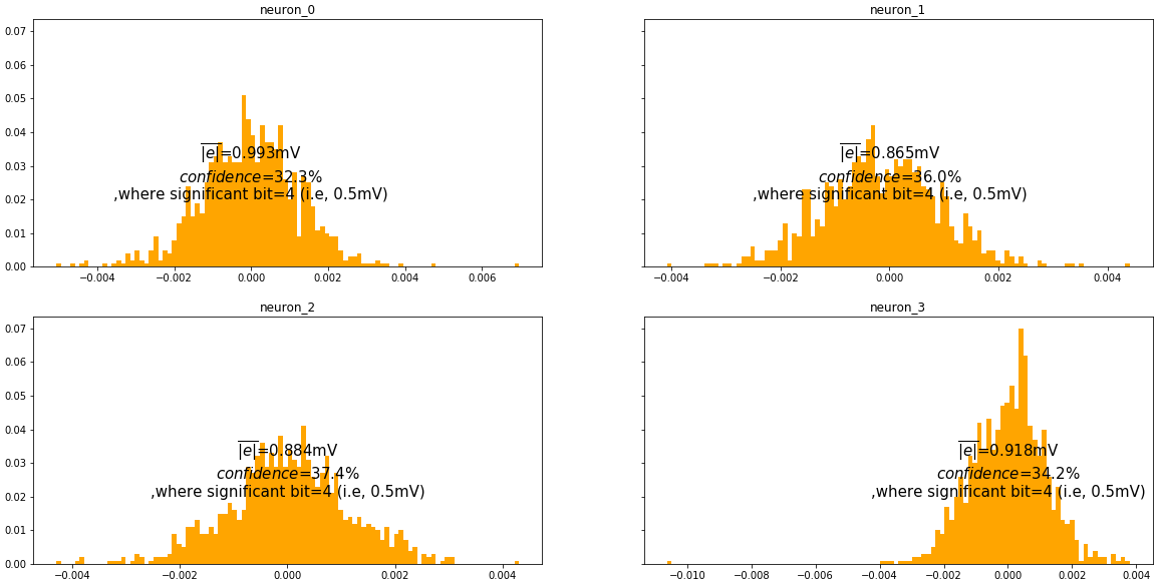}
\end{subfigure}
\caption{The error distribution of test data for Table~\ref{table_generalization}.}
\label{fig_rram_ps32_err_distrib}
\end{figure}

\end{document}